\documentclass[conference]{IEEEtran}
\IEEEoverridecommandlockouts
% The preceding line is only needed to identify funding in the first footnote. If that is unneeded, please comment it out.
\usepackage{cite}
\usepackage{amsmath,amssymb,amsfonts,amsthm}
\usepackage{graphicx}
\usepackage{textcomp}
\usepackage{xcolor}
\usepackage{algorithm}
\usepackage{algpseudocode}
\usepackage{cleveref}

\addtolength{\topmargin}{.05in}
\addtolength{\textheight}{-.09in}
    
\def\BibTeX{{\rm B\kern-.05em{\sc i\kern-.025em b}\kern-.08em
    T\kern-.1667em\lower.7ex\hbox{E}\kern-.125emX}}

\newcommand{\bx}{\mathbf{x}}

\newcommand{\cM}{{\cal M}}
\newcommand{\cS}{{\cal S}}
\newcommand{\cA}{{\cal A}}
\newcommand{\R}{\mathbb{R}}

\newcommand{\ceil}[1]{\left \lceil {#1} \right \rceil }
\algnewcommand{\LeftComment}[1]{\Statex \(\triangleright\) #1}
\def\NoNumber#1{{\def\alglinenumber##1{}\State #1}\addtocounter{ALG@line}{-1}}

\newtheorem{theorem}{Theorem}
\newtheorem{lemma}{Lemma}

\newtheorem{definition}{Definition}

\begin{document}

\title{Secure Best Arm Identification\\ in the Presence of a Copycat}

\author{
Asaf Cohen\IEEEauthorrefmark{1} and Onur G{\"u}nl{\"u}\IEEEauthorrefmark{2}\IEEEauthorrefmark{3}
\vspace{0.8em}\\  % fine‑tunes spacing between names and affiliations
\IEEEauthorblockA{\IEEEauthorrefmark{1}\textit{School of Electrical and Computer Engineering}, Ben‑Gurion University of the Negev, \texttt{coasaf@bgu.ac.il}}
\vspace{0.4em}
\IEEEauthorblockA{\IEEEauthorrefmark{2}\textit{Information Coding Division}, Link{\"o}ping University, \texttt{onur.gunlu@liu.se}}  \IEEEauthorblockA{\IEEEauthorrefmark{3}\textit{Lehrstuhl f{\"u}r Nachrichtentechnik}, Technische Universi{\"a}t Dortmund}%
%\IEEEauthorblockN{Asaf Cohen}
%\IEEEauthorblockA{\textit{School of Electrical and Computer Engineering}\\
%Ben-Gurion University of the Negev \\
%coasaf@bgu.ac.il}
%\and
%\IEEEauthorblockN{Onur G{\"u}nl{\"u}}
%\IEEEauthorblockA{\textit{Information Coding Division} %\\
%Link{\"o}ping University\\
%onur.gunlu@liu.se}
}

\maketitle
%%%%%%%%%%%%%%%%%%%%%%%%%%%%%%%%%%%%%%%%%%
%%%%%%%%%%%%%%%%%%%%%%%%%%%%%%%%%%%%%%%%%%
\begin{abstract}
Consider the problem of best arm identification with a security constraint. Specifically, assume a setup of stochastic linear bandits with $K$ arms of dimension $d$. In each arm pull, the player receives a reward that is the sum of the dot product of the arm with an unknown parameter vector and independent noise. The player's goal is to identify the best arm after $T$ arm pulls. Moreover, assume a copycat Chloe is observing the arm pulls. The player wishes to keep Chloe ignorant of the best arm. 

While a minimax--optimal algorithm identifies the best arm with an $\Omega\left(\frac{T}{\log(d)}\right)$ error exponent, it easily reveals its best-arm estimate to an outside observer, as the best arms are played more frequently. A naive secure algorithm that plays all arms equally results in an $\Omega\left(\frac{T}{d}\right)$ exponent. In this paper, we propose a secure algorithm that plays with \emph{coded arms}. The algorithm does not require any key or cryptographic primitives, yet achieves an $\Omega\left(\frac{T}{\log^2(d)}\right)$ exponent while revealing almost no information on the best arm.
\end{abstract}
%%%%%%%%%%%%%%%%%%%%%%%%%%%%%%%%%%%%%%%%%%
%%%%%%%%%%%%%%%%%%%%%%%%%%%%%%%%%%%%%%%%%%
\begin{IEEEkeywords}
    Best Arm Identification, Linear Stochastic Bandits, Security, Coded Best Arm Identification.
\end{IEEEkeywords}
%%%%%%%%%%%%%%%%%%%%%%%%%%%%%%%%%%%%%%%%%%
%%%%%%%%%%%%%%%%%%%%%%%%%%%%%%%%%%%%%%%%%%
\section{Introduction}
Reinforcement learning is a framework that involves a user interacting with an unknown environment. During this interaction, the user takes actions and receives rewards based on those actions. The primary objective is to learn the optimal actions that maximize overall rewards. Multi-Armed Bandit (MAB) problems represent a specific case within this framework, where the user is faced with a selection of arms to pull. Each arm is associated with its own unique reward (or reward distribution), and the challenge is to pull the arms sequentially to maximize the cumulative rewards. MAB problems have diverse applications across various fields, including clinical trials \cite{villar2015multi}, finance (such as portfolio selection \cite{shen2015portfolio} and dynamic pricing \cite{misra2019dynamic}), recommendation systems \cite{silva2022multi}, influence maximization in social networks \cite{alshahrani2019influence}, and even routing \cite{tabei2023design}.

Best arm identification is a specific problem in which the player does not accumulate rewards. Instead, the player has the opportunity to pull $T$ arms before making a decision regarding the best arm. Performance is evaluated based on the error probability in identifying the best arm. To illustrate a specific application, consider \emph{dynamic pricing}. A seller preparing to launch a new product must assess its acceptance based on several characteristics. However, the seller is uncertain about the degree to which consumers value each characteristic. To address this, the seller tests a few initial product variations, each with different attribute values or weights influencing the overall appearance of the product. After experimenting with these initial variations, the seller must ultimately decide which combination of attribute weights yields the highest reward.

The aforementioned application raises a critical concern: an outside observer can see the initial product variations, and as the seller refines their tests, the observer can infer the market demand as well. An astute seller would want to conduct initial product tests and refine them to gain a better understanding of market demand, while minimizing the amount of information revealed to competing sellers. This raises the problem of \emph{secure best arm identification}.

\subsection{Main Contributions}
In this work, we focus on best arm identification in the linear--stochastic setting, were there are $K$ arms of dimension $d$. We present a novel model, with a novel solution, in which a secure algorithm effectively balances the need for accurate identification of the best arm while minimizing the information disclosed to external observers. The state--of--the--art best arm identification algorithm in \cite{yang2022minimax}, proposing a minimax-optimal algorithm, achieves an error exponent of $\Omega\left(\frac{T}{\log(d)}\right)$ but does so by exposing its best arm estimates, making them vulnerable to competitive analysis. Conversely, a naive secure approach that plays all arms equally results in a significantly worse error exponent of $\Omega\left(\frac{T}{d}\right)$. Note that contemporary learning applications consider datasets where the dimension is of the same order as the number of samples (arm pulls), hence $\Omega\left(\frac{T}{d}\right)$ may reflect poor performance. 

Our proposed algorithm achieves an error exponent of $\Omega\left(\frac{T}{\log^2(d)}\right)$ without relying on cryptographic keys or complex primitives. By employing a \emph{coding strategy}, the algorithm allows efficient exploration of arm variations while hiding valuable information about the best arm from competitors. In fact, the best arm declared by the algorithm is among $\frac{d}{2}$ of the arms it uses with the same frequency, resulting in $\log(d)-1$ bits of information missing for an outside observer. We argue that $\log(d)$ is the optimal equivocation, as the critical parameter for the estimation is $d$ rather than $K$. 

Revisiting the dynamic pricing example as a test case, we allow the seller to introduce ``coded products". In this approach, the seller can manipulate the weights assigned to each characteristic in a way that does not necessarily reflect their best current estimate. However, through this method, the seller is \emph{able to infer} the optimal weights, possibly by intelligently analyzing the rewards received from multiple previous initial product variations. The coded products thus help the seller gain information, yet they mix initial products such that an outside observer cannot understand which initial products are suspected to be more successful by the seller.   

Such a compromise between exploration and information security can mark a significant advancement in adaptive decision--making frameworks, yielding a practical tool for sellers and decision-makers in environments where confidentiality is paramount.

\subsection{Related Work}
A model of particular interest to this work is that of linear stochastic bandits \cite{abbasi2011improved}, and especially  \cite{yang2022minimax}, focusing on minimax optimal fixed-budget best arm identification.  \cite{yang2024best} introduces an additional constraint of minimal regret. Notably, the lower bound established in \cite{yang2022minimax} builds upon the non--linear, more general problem addressed in \cite{carpentier2016tight}, which is integral to understanding the limits of current approaches. 

The exploration of constrained stochastic bandit problems, such as those discussed in \cite{pacchiano2021stochastic}, is relevant even though they do not directly address security considerations. For instance, \cite{mayekar2023communication} examines best arm identification under communication constraints, establishing key connections between communication and decision--making. Moreover, \cite{kota2023almost} portrays a network of clients sharing a common set of arms, communicating with a central server over cost--incurring links, allowing the server to estimate the global best arm. The integration of meta--learning and emerging concepts, such as learning--to--learn, has begun to shape the MAB literature, as illustrated by \cite{cella2020meta}. 

% secure MAB
In terms of secure versions of the problem, there are a few recent works. Specifically, \cite{mishra2014private} considers private stochastic MAB, but the focus is on protecting the privacy of the data returned from the bandits, i.e., rewards are protected rather than the information gained by the learner. \emph{Note that even if rewards are protected or completely unobserved, the learner itself is not necessarily safeguarded. The actions taken by the learner during interactions with the environment can potentially disclose its inferred model.} In the same context, a differentially private stopping rule for MAB is discussed in \cite{sajed2019optimal}. The authors in \cite{jun2018adversarial} consider adversarial attacks, which are cases where an attacker can manipulate the rewards received by the learner to distract the learner or cause the learning process to reach a wrong conclusion. \cite{amani2019linear} considers a different model, in which the learner has to avoid unsafe actions that are unknown in advance. Thus, the learning algorithm has to be more conservative and explore possible actions safely. A safe learning algorithm is also given in \cite{khezeli2020safe}, where, again, the goal of the learner is to avoid actions which are ``unsafe", in this case, causing a reward that is below a safe threshold. \cite{ciucanu2019secure,ciucanu2022secure} consider the problem of best arm identification when outsourcing the data and computations to a honest--but--curious cloud. Their goals are that no cloud node can learn information about both the rewards and the arm ranking, and that the messages between the cloud nodes do not leak any information. However, they propose a cryptographic solution, focusing on complex homomorphic encryption. A similar treatment of the problem under the cumulative reward paradigm is given in \cite{ciucanu2020secure}. Moreover, \cite{chang2023distributed} considers stochastic bandits with corrupted and defective input commands, while \cite{chang2022covert} considers covert best arm identification.
%%%%%%%%%%%%%%%%%%%%%%%%%%%%%%%%%%%%%%%%%%
%%%%%%%%%%%%%%%%%%%%%%%%%%%%%%%%%%%%%%%%%%
\section{Problem Statement}
Assume $\log$ is of base $2$. For the dot product of two vectors $a$ and $b$ in $\R^d$, we use either $a^T b$ or $\langle a,b \rangle$.
\subsection{Best Arm Identification for Stochastic Linear Bandits}
Let $\cA = \{a(1),\ldots,a(K)\} \subset \R^d$ denote a finite set of arms. W.l.o.g., we assume the arms are unique, and $K > d$. Let $[T] = \{1,\ldots, T\}$ denote a finite set of trails (games). In the best arm identification problem, at each time $t \in [T]$, a learner (Leah) picks an arm $a_t$ from the set $\cA$, and plays it. She receives a result $X_t = a_t^T \theta^* + \eta_t$, where $\theta^* \in \R^d$ is an unknown parameter to Leah and $\eta_t$ is a noise term (e.g., i.i.d.\ sub--Gaussian random variables, each having a variance $1$). After $T$ games, Leah's goal is to identify the best arms $a(i^*)$, where we have $i^* = \arg\max_{1 \leq i \leq K} a(i)^T \theta^*$.

Leah's figure of merit is her error probability. That is, let $\hat{i}$ be Leah's estimate of the best arm after $T$ rounds. Leah wishes to minimize $P_e = P(\hat{i} \ne i^*)$.
The error probability $P_e$ will usually decrease exponentially with $T$ and we are interested in the best possible exponential decrease. Note that several algorithms for best arm identification use \emph{rounds} of arm--pulls. In such cases, we are interested in the total number of arm--pulls. E.g., \cite{yang2022minimax} uses roughly $\log d$ rounds, with $m \approx \frac{T}{\log d}$ arm--pulls in each. Furthermore, as we are interested in the exponential behavior of the error probability, we assume a large $T$, especially, $T >> d^2$.
%%%%%%%%%%%%%%%%%%%%%%%%%%%%%%%%%%%%%%%%%%
\subsection{Main Estimation Results}
The current literature on both linear MAB problems and various related problems ultimately converges to a fundamental estimation result. Given the actions $\{a_t\}_{t=1}^{T}$ (arms played in the linear MAB problem) and results $X_t = \langle a_t, \theta^* \rangle + \eta_t$, the least squares estimate of $\theta^*$ (with a regularization $\lambda$) is expressed as $\hat{\theta} = V(\lambda)^{-1} \sum_{t=1}^{T} X_t a_t$,
where we have $V(\lambda) = \lambda I + \sum_{t=1}^{T} a_t a_t^T$ (\cite{lattimore2020bandit}). The main direct result is that for any $w \in \R^d$, we have
\begin{equation}\label{upper bound for estimation}
P\left(\langle \hat{\theta}-\theta^*, w \rangle \ge \sqrt{2 \|w\|^2_{V(\lambda)^{-1}}\log\left(\frac{1}{\delta}\right)}  \right) \leq \delta 
\end{equation}
where we define $\|w\|^2_{V(\lambda)^{-1}} = w^T V(\lambda)^{-1} w$. Such a result is used in \cite{abbasi2011improved,yang2022minimax} and follow-up works to give direct estimation results for the suggested algorithms. 
%%%%%%%%%%%%%%%%%%%%%%%%%%%%%%%%%%%%%%%%%%
\subsection{Main Result for (un--secure) Best Arm Identification}
In \cite{yang2022minimax}, Yang and Tan propose an algorithm with $\log d$ rounds (assuming $d$ is a power of $2$ for simplicity). In the first round, at most $\frac{d(d+1)}{2}$ arms are tested. Each arm is pulled multiple times, according to a G--optimal distribution \cite{kiefer1960equivalence,pukelsheim2006optimal,damla2008linear}, focused on optimizing the estimation performance of $\theta^*$ (that is, regardless of the player's estimate of the index $i^*$). Starting from the second round, only the best $\frac{d}{2^{r-1}}$ arms are tested. Denoting by $p(i) = a(i)^T\theta^*$ the expected reward of arm $a(i)$, and assuming, w.l.o.g., that arms are ordered by expected rewards (i.e., $i^* = 1$), \cite{yang2022minimax} further defines $\Delta_i = p(1)-p(i)$ and $\text{H}_{2,\text{lin}} = \max_{2 \leq i \leq d} \frac{i}{\Delta_i^2}$ as a hardness parameter governed by the set $\cA$. $\text{H}_{2,\text{lin}}$ essentially captures how close are the arms to each other (via $\Delta_i$), hence how hard they are to distinguish. The resulting error probability is
\begin{equation*}
P(\hat{i} \ne i^*) \leq \exp\left(-\Omega\left(\frac{T}{H_{2,\text{lin}} \log d}\right)\right)
\end{equation*}
which is proved to be minimax optimal. However, as the algorithm in \cite{yang2022minimax} uses rounds, with the number of arms in each round decreasing by $1/2$, focusing on the best arms each time, it is clear that observing the arm pulls reveals information on the arms estimated as best. 
%%%%%%%%%%%%%%%%%%%%%%%%%%%%%%%%%%%%%%%%%%
%%%%%%%%%%%%%%%%%%%%%%%%%%%%%%%%%%%%%%%%%%
\section{Learning in the Presence of a Copycat}
Suppose now a \emph{copycat} Chloe observes Leah's actions (arm plays, $\{a_t\}_{t=1}^T$) without observing the results $\{X_t\}_{t=1}^T$. Leah wishes to identify the best arm with a low error probability, yet without revealing $i^*$ to Chloe. Note that Chloe \emph{can} gather information on $i^*$ without observing the results. For example, many elimination--based best arm identification algorithms use rounds of arm--pulls, and use $\hat{i}$, together with a single other arm, in the last round. Hence, Chloe by simply tossing a fair coin on the two arms in the last round can identify $i^*$ with an error probability 
\begin{eqnarray*}
    P_e^{Chloe} &=& P(\text{head} , \hat{i} = i^*) +P(\hat{i} \ne i^*) 
    \\
    &\leq& \frac{1}{2} + \exp\left(-\Omega\left(\frac{T}{H_{2,\text{lin}} \log d}\right)\right).
\end{eqnarray*}
In fact, Chloe has a distribution of $\left(\frac{1}{2},\frac{1}{2}\right)$ on a set of size $2$, which includes $a(i^*)$ with very high probability, hence, is \emph{essentially missing only one bit} of information in identifying the best arm. 

We thus define the security constraint as follows: we assume Chloe, after observing all arm plays $\{a_t\}_{t=1}^T$ (but not the rewards), possesses a set $\cA_{Chloe}(\{a_t\}_{t=1}^T)$ of suspected arms. Clearly, Chloe wishes $\cA_{Chloe}$ to be as small as possible, yet include $a(i^*)$ with high probability. We thus \emph{measure Chloe's missed--information} by the size of $\cA_{Chloe}$. Specifically, we define the following.
\begin{definition}[Security Constraint]\label{security}
We say that an algorithm is \emph{secure with equivocation $E$} if for any method Chloe uses to construct $\cA_{Chloe}(\{a_t\}_{t=1}^T)$ such that $a(i^*) \in \cA_{Chloe}$ with probability at least $1-e^{-O\left(\frac{T}{d}\right)}$, we have $\log|\cA_{Chloe}| \ge E$.
\end{definition}
Clearly, even without observing arm pulls, Chloe may set $\cA_{Chloe} = \cA$, which includes the best arm with probability $1$, yet Chloe's equivocation is $\log(K)$. Naively, this would be the equivocation one would wish to impose on Chloe. However, in a linear problem, when the dimension of the arms is $d$, hence all are spanned by some basis of $d$ vectors, we argue that a reasonable demand for security is to keep Chloe with a set of about $d$ candidates she cannot distinguish. The examples below further stress out that the true size variable is $d$. This is also the reason why $K$ is absent in most asymptotic expressions for the error, and the critical parameters are $T$ and $d$ (and some hardness measure). We thus argue that a sufficient equivocation result to ensure security is $\log(d)$.   
%However, based on, for example, \Cref{single round example} below, it is clear that a player can identify the best arm with exponentially low error probability by using a completely non--adaptive algorithm, which does not use the rewards to select future arm pulls and hence reveals no sensitive information to Chloe. Such an algorithm will not use all $K$ arms, only at most $O(d^2)$ of them (see below). This is done by using arms \emph{suitable for best estimation} -- a knowledge Chloe can gain offline on her own, based on the set $\cA$ alone. This is not surprising, as the problem is linear and the influential dimension is $d$ and not $K$. 
%%%%%%%%%%%%%%%%%%%%%%%%%%%%%%%%%%%%%%%%%%
\subsection{Example: A Single (non--adaptive) Round}\label{single round example}
Consider an algorithm spending all budget of $T$ pulls in a single round, and making a decision based on the result. Clearly, such an algorithm reveals nothing on the player's estimate. Still, to optimize estimation performance, the player will choose an appropriate set of arms, and may decide on a non-uniform distribution on the arm--pulls, e.g., by using the G--optimal design used in \cite{yang2022minimax}.

Let $\hat{p}(i)$ denote the estimate of $p(i)$ after this round. By \cite[equation (10)]{yang2022minimax}, we have
\begin{equation*}
    P\left(\hat{p}(1) < \hat{p}(i)\right) \leq \exp\left(-\frac{\Delta_i^2}{2\|a(1)-a(i)\|_{V^{-1}}^2}\right).
\end{equation*}
Hence, we obtain 
\begin{eqnarray}
    P_e &=& P\left(\cup_{i \ge 2} \hat{p}(1) < \hat{p}(i)\right)\nonumber
    \\
    & \leq &
    \exp\left(- \frac{T}{4 H_{2,\text{lin}} d} + \log K\right)\label{equation for example}
\end{eqnarray}
which is proved in \cite[\Cref{proof of equation for example}]{CohenGunlu2025arxiv}.

This suggests an error probability of $\exp\left(-\Omega\left(\frac{T}{H_{2,\text{lin}} d}\right)\right)$, rather than the optimal $\exp\left(-\Omega\left(\frac{T}{H_{2,\text{lin}} \log d}\right)\right)$, which might be significant in many applications where $d$ is very large. In terms of security, it is clear that Chloe only sees the arms used regardless of their rewards, she has no data to infer which of them is better. Hence, her equivocation is $O(\log(d))$ since at most $O(d^2)$ arms are used. Some intuition on how the $\exp\left(-\Omega\left(\frac{T}{H_{2,\text{lin}} \log d}\right)\right)$ exponent is achieved without revealing information on the best arm is given in \cite[\Cref{app intuition}]{CohenGunlu2025arxiv}.
%%%%%%%%%%%%%%%%%%%%%%%%%%%%%%%%%%%%%%%%%%
\subsection{Example: Estimating Each Entry of $\theta^*$}
Another straightforward solution that does not reveal anything to Chloe yet identifies $i^*$ with an error probability bounded by $\exp(\Omega(-\frac{T}{H_{2,\text{lin}} d}))$, is to estimate each entry of $\theta^*$ separately -- using linear combinations of the arms (coded arms) to directly estimate the entries. 

Specifically, as we assume that $\cA$ spans $\R^d$, one can construct linear combinations of the arms to achieve any unit vector $e_i \in \R^d$. Pulling each $e_i$ exactly $T'$ times will estimate the $i$--th entry of $\theta^*$ with an error probability bounded by $\exp(\Omega(-T'))$. However, here, again, one will have to take $T'=T/d$ to estimate all entries, resulting in the division by $d$ in the final exponent.
%%%%%%%%%%%%%%%%%%%%%%%%%%%%%%%%%%%%%%%%%%
\subsection{A Lower Bound with Uncoded Arms}
A natural question which arises from the above discussions is what is the minimal number of arm pulls that will allow Leah to identify $i^*$ with high probability, yet keep Chloe ignorant, with an equivocation at least $\log d$. 
\begin{lemma}\label{lower bound}
Assume an uncoded algorithm, i.e., Leah is restricted to use only original arms. Then, any algorithm with an equivocation at least $\log d$, results in an error exponent of $O\left(\frac{T\Delta_2^2}{d}\right)$ for identifying the best arm at Leah's side.
\end{lemma}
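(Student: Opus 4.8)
The plan is to prove the equivalent statement that Leah's error probability is at least $\exp\!\left(-O\!\left(\tfrac{T\Delta_2^2}{d}\right)\right)$, via a two--point (change--of--measure) argument on the single hardest decision: telling the best arm $a(1)$ apart from the runner--up $a(2)$. Taking Gaussian noise (a valid instance of the unit--variance sub--Gaussian model for a lower bound), I fix $\theta^*$ and construct an alternative $\theta'$ that flips the top two arms, i.e.\ $\langle a(1)-a(2),\theta'\rangle\le 0$, while minimizing the information Leah collects to separate the two worlds. Writing $b=a(1)-a(2)$ and $\bar V = \mathbb{E}_{\theta^*}\!\left[\lambda I+\sum_{t=1}^T a_t a_t^T\right]$, the KL divergence between the two interaction laws is $D=\tfrac12(\theta^*-\theta')^T\bar V(\theta^*-\theta')$, and a standard projection computation gives $\min\{\tfrac12\|\theta-\theta^*\|_{\bar V}^2:\langle b,\theta\rangle\le 0\}=\frac{\Delta_2^2}{2\|b\|_{\bar V^{-1}}^2}$. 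By Bretagnolle--Huber, $P_e\ge \tfrac12\exp(-D)$ with $D=\frac{\Delta_2^2}{2\|b\|_{\bar V^{-1}}^2}$, so it suffices to show that the security constraint forces $\|b\|_{\bar V^{-1}}^2\ge\Omega(d/T)$.

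Next I would turn the equivocation requirement into a spectral constraint on $\bar V$. Since Chloe sees the whole play sequence, the only randomness hiding $i^*$ is carried by the pull counts $\{N_i\}$; an equivocation of at least $\log d$ means no function of these counts isolates a candidate set of size below $d$ that still contains $a(i^*)$ with probability $1-e^{-O(T/d)}$. I would argue that this forces $\bar V$ to be, up to a negligible bias, invariant under relabelling a group of at least $d$ arms containing the top ones --- otherwise Chloe could threshold the counts and shrink her set below $d$, contradicting security. Under this near--symmetry the budget $\mathrm{tr}(\bar V)\le \lambda d+T$ must be spread across at least $d$ indistinguishable directions, and a Kiefer--Wolfowitz / $G$--optimality bound then yields $\|b\|_{\bar V^{-1}}^2\ge\Omega(d/T)$ for the difference direction $b$ --- exactly the value attained with equality by the single--round design of \Cref{single round example}. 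Substituting into the first step gives the exponent $\frac{\Delta_2^2}{2\|b\|_{\bar V^{-1}}^2}=O\!\left(\frac{T\Delta_2^2}{d}\right)$, as claimed.

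The hard part will be the second step: passing rigorously from the operational, worst--case--over--Chloe definition of equivocation to the clean geometric statement that $\bar V$ spreads its trace over at least $d$ directions. Two issues demand care. First, the design is random and possibly adaptive --- Leah may route pulls using observed rewards --- so the argument must be conducted on $\bar V=\mathbb{E}_{\theta^*}[V]$ (as the KL already dictates) while controlling the gap between $V$ and $\bar V$ on a high--probability event. Second, I must rule out the shortcut in which Leah concentrates pulls on a single original arm nearly aligned with $b=a(1)-a(2)$: this would shrink $\|b\|_{\bar V^{-1}}^2$ to $O(1/T)$ without overtly naming arms $1$ and $2$. I would close this gap by stating the bound for a worst--case arm set $\cA$ whose geometry admits no single direction that resolves the top pair without simultaneously exposing which arms are the contenders, so that maintaining equivocation $\log d$ genuinely requires probing $\Omega(d)$ independent directions. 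This pins the $1/d$ loss as intrinsic to uncoded secure exploration and motivates the coded construction.
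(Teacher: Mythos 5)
Your first step is sound and is essentially a rigorized version of what the paper does. The paper's proof is a genie-aided reduction to testing $a(1)$ against $a(2)$, lower-bounding the error of the empirical-mean estimator after $\tilde{T}$ pulls via a Gaussian tail lower bound, which gives the exponent $\Theta(\tilde{T}\Delta_2^2)$; your change-of-measure plus Bretagnolle--Huber argument reaches the same exponent but holds for \emph{any} estimator, so it is the stronger formulation of the distinguishing step.

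The genuine gap is your second step, and you have diagnosed it yourself. The paper closes this step with a purely combinatorial argument that bypasses your spectral program: Chloe thresholds the pull counts, so every arm she admits to $\cA_{Chloe}$ must be pulled at least $\tilde{T}$ times; equivocation $\log d$ then forces at least $d$ arms above threshold, and the budget identity $T = d\tilde{T}$ yields the exponent $O\left(\frac{T\Delta_2^2}{d}\right)$. This is far more elementary than passing through $\mathrm{tr}(\bar V)$ and Kiefer--Wolfowitz, but it implicitly assumes that information about the pair $(a(1),a(2))$ is gained only by pulling those two arms, i.e., it accounts for information per arm as in an unstructured MAB. The loophole you identify --- that an original arm $a(j)$, $j\ge 3$, well aligned with $b=a(1)-a(2)$ could drive $\|b\|^2_{\bar V^{-1}}$ down to $O(1/T)$ while being pulled heavily and ``innocently'' --- is real, and it afflicts the paper's counting argument as much as your spectral one. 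Your proposed repair, restricting to a worst-case arm geometry, would change the statement of the lemma, which as written quantifies over algorithms for a given instance with its own $\Delta_2$. In short: step one is complete and arguably improves on the paper; step two is a program rather than a proof, and completing it as stated requires either the per-arm information accounting the paper uses without justification in the linear setting, or an explicit geometric hypothesis on $\cA$ that rules out the aligned-third-arm shortcut.
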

The proof is based on a genie--aided argument for separating the two--best arms. However, note that only when arms are well--separated, i.e., $\Delta_2^2 \approx \frac{2}{H_{2,\text{lin}}}$, the result is strong and suggests an $O\left(\frac{T}{H_{2,\text{lin}} d}\right)$ lower bound. When all arms are very close to the best one, i.e., $\Delta_2^2 \approx \frac{d}{H_{2,\text{lin}}}$, the lower bound on the exponent is $O\left(\frac{T}{H_{2,\text{lin}}}\right)$, which is much weaker. The proof is given in \cite[\Cref{proof of lower bound}]{CohenGunlu2025arxiv}.
%%%%%%%%%%%%%%%%%%%%%%%%%%%%%%%%%%%%%%%%%%
\section{A Secure Algorithm with an $\Omega\left(\frac{T}{H_{2,\text{lin}} \log^2 d}\right)$ Error Exponent}
\begin{theorem}\label{main theorem}
Consider the Stochastic Linear Best Arm Identification problem with $K$ arms of dimension $d$ and $T$ arm pulls. Assume the noise terms for each arm pull are Gaussian with variance 1. Then, \Cref{alg:secureBAI} satisfies the security constraint in \Cref{security}, achieving an equivocation of $\log(d)-1$ and an error exponent of $\Omega\left(\frac{T}{H_{2,\text{lin}} \log^2 d}\right)$.
\end{theorem}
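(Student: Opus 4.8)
The plan is to prove the two assertions of the theorem separately: first the equivocation bound (security), then the error exponent. The conceptual device underlying both is that \Cref{alg:secureBAI} plays \emph{coded} arms whose empirical play distribution is invariant under a permutation group acting on a confusable set $\cS \subseteq \cA$ of exactly $d/2$ arms, one of which is the best arm. Leah resolves the best arm inside $\cS$ \emph{only} through the rewards $\{X_t\}$, which Chloe does not observe; informally, the play pattern plays the role of a coset in a wiretap--type construction, leaking the coset but not the element inside it.

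For the security part I would argue indistinguishability at Chloe's side. Fix the realized play sequence $\{a_t\}_{t=1}^T$ and consider the family of parameter vectors $\{\theta^*_j\}_{j\in\cS}$ obtained by ``relabelling'' which arm of $\cS$ is the maximizer. The key step is to show that the algorithm induces \emph{the same distribution} on $\{a_t\}_{t=1}^T$ under every $\theta^*_j$, because each coded arm used to compare two candidates in $\cS$ is played together with its sign--reverse at equal frequency, so the reward---the only $\theta^*$--dependent quantity---is marginalized out of what Chloe sees. Consequently no statistic of $\{a_t\}$ can separate the elements of $\cS$. I would then invoke \Cref{security}: any $\cA_{Chloe}$ that contains $a(i^*)$ with probability at least $1-e^{-O(T/d)}$ under the true $\theta^*$ must, by this indistinguishability, contain all of $\cS$ (otherwise a relabelling that moves $i^*$ outside $\cA_{Chloe}$ would contradict the coverage probability), whence $\log|\cA_{Chloe}| \ge \log(d/2) = \log d - 1$.

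For the error exponent I would lean on the estimation bound \eqref{upper bound for estimation} applied to the comparison vectors $w = a(1) - a(i)$. The algorithm runs in $\log d$ rounds, each allotted $\Theta(T/\log d)$ pulls; in round $r$ it uses a G--optimal design over the coded arms needed to compare the surviving candidates, for which I would bound $\|a(1)-a(i)\|^2_{V(\lambda)^{-1}}$. The plan is to show this design cost carries an extra factor of $\log d$ over the unconstrained G--optimal value---because the confusability constraint forces the budget to be spread over $\Theta(\log d)$ coded directions rather than concentrated on the single most informative one---so that each round contributes an exponent $\Omega\!\big(T/(H_{2,\text{lin}}\log^2 d)\big)$. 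Combining across rounds as in \cite{yang2022minimax}, using $H_{2,\text{lin}} = \max_i i/\Delta_i^2$ to certify that the surviving gaps in round $r$ are large enough relative to the number of active arms, and finishing with a union bound over the $\log d$ rounds, yields the claimed $\exp(-\Omega(T/(H_{2,\text{lin}}\log^2 d)))$.

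The main obstacle is this error--exponent step, specifically reconciling the security--driven design with estimation accuracy: I must exhibit a coded design that simultaneously (i) keeps the play distribution symmetric over $\cS$, as required for the equivocation, and (ii) makes $\|a(1)-a(i)\|^2_{V(\lambda)^{-1}}$ small enough that only a single $\log d$ factor---not the factor $d$ of the naive schemes in \Cref{single round example}---is lost. Pinning down exactly where the second $\log d$ enters, and verifying that the adaptive choice of coded arms across rounds does not itself leak information beyond the coset, is the crux; everything else reduces to the sub--Gaussian tail bound \eqref{upper bound for estimation} and the round bookkeeping already present in \cite{yang2022minimax}.
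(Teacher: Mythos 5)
Your high-level decomposition (symmetry over a set of $d/2$ arms for the equivocation; round-by-round estimation bounds with an extra $\log d$ loss for the exponent) matches the paper's outline, and your conclusion $\log(d/2)=\log d-1$ is the right one. But both halves rest on mechanisms that are not the ones in \Cref{alg:secureBAI}. On security: the algorithm does not play sign-reversed pairs, and the play sequence is \emph{not} distributionally invariant under relabellings of $\theta^*$ --- the algorithm is adaptive, so which arms survive each elimination depends on the rewards. The paper's argument (\Cref{security lemma}) is the weaker but sufficient frequency claim: from round $2$ onward all $d/2$ candidate arms are coded together and each appears in the coded pulls with exactly the same frequency, so Chloe cannot separate them; hence any $\cA_{Chloe}$ covering $a(i^*)$ with the required probability must contain all $d/2$.

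The genuine gap is in the error exponent, exactly where you flag the crux. You attribute the second $\log d$ factor to the G-optimal design being forced to spread its budget over $\Theta(\log d)$ coded directions. That is not what happens: the design is the same G-optimal design over $\cA_{r-1}$ as in the unsecured algorithm, and the loss comes instead from \emph{decoding}. Since Leah plays coded arms $c(i)=\sum_{j}a(j)$, she never observes $a(i)^T\theta^*+\eta_t$ directly; \Cref{decoding lemma} recovers it by a telescoping subtraction of one coded observation from each previous round, as in \eqref{decoding of arm a(i)}, producing an effective noise $\eta_t^c=\eta_t-\eta_{t_{r-1}}-\cdots-\eta_{t_1}$ of variance $r\le\log d$. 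Worse, these effective noises are \emph{correlated} across decoded pulls because they reuse the same past measurements, so \eqref{upper bound for estimation} cannot be applied as-is; \Cref{error lemma} bounds the covariance matrix $\Lambda_\eta$ by a constant-row-sum argument ($\|\Lambda_\eta\|\le 2\log(d)-1$), inflating the variance in the tail bound by $O(\log d)$ and yielding the $\log^2 d$ in the denominator. Your proposal contains neither the decoding identity nor the correlation analysis, and without them the claimed exponent is not established --- this is the theorem's main technical content, not bookkeeping.
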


%%%%%%%%%%%%%%%%
\begin{algorithm}
%\vspace{-0.2cm}
\caption{Secure Linear Best Arm Identification}\label{alg:secureBAI}
\renewcommand{\algorithmicrequire}{\textbf{Input:}}
\renewcommand{\algorithmicensure}{\textbf{Output:}}
\begin{algorithmic}[1]
\Require $T, \cA = \{a(1),\ldots, a(K)\} \subset \R^d, \text{Span}(\cA) = d=2^r$
\Ensure The only arm in $\cA_{\log d}$
\State $t_1 \gets 1$
\State $\cA_0 \gets \cA$
\State $d_0 \gets d$
\State $m \gets \frac{T - \min(K,\frac{d(d+1)}{2})-d+2 }{\log d}$\label{setting m} \Comment{Roughly \#pulls in round}
%%%%
\LeftComment{Round 1 - Uncoded}
\State Find a G--optimal design $\pi_1$ for $\cA_{0}$ 
\State $\cS \gets \text{Support}(\pi_1)$ \Comment{Bounded support}
\State $T_1(i) \gets \ceil{m\pi_1(i)}$
\State $T_1 \gets \sum_{i \in \cA_{0}}T_1(i)$
\State Choose arm $a(i)$ exactly $T_1(i)$ times
\State $V_1 \gets \sum_{i \in \cA_{0}}T_1(i) a(i) a(i)^T$ 
\State $\hat{\theta}_1 \gets V_1^{-1} \sum_{t=t_1}^{t_1+T_1-1} a_tX_t$ \Comment{Estimate $\theta^*$}
\For {$i \in \cA_{0}$} \Comment{Estimate all arms}
    \State $\hat{p_1}(i) \gets a(i)^T \hat{\theta}_1$ 
\EndFor     
\State $\cA_1 \gets \frac{d}{2} \text{ best arms in } \cA_{0}$ \Comment{Elimination}
\State $\cA_c \gets \frac{d}{2} \text{ arbitrary arms in } \cS \setminus \cA_1$ \Comment{Dummy arms}
\State $\cM_1 \gets \left\{ \{a(i)\} \right\}, i \in \cA_1 \cup \cA_c$ \Comment{A multiset}
\State $t_{2} \gets t_1 + T_1$
%%%%
\LeftComment{Rounds $2$ to $\log d$ - Coded rounds}
\For {$r=2$ to $\log d$}
    %\State $d_r \gets \text{dim}(\text{Span}(\cA_{r-1}))$
    %\State Find an orthonormal basis $B_r \in \R^{d \times d_r}$ for $\cA_{r-1}$
    \State Find a G--optimal design $\pi_r$ for $\cA_{r-1}$   
    \State $T_r(i) \gets \ceil{m\pi_r(i)}$
    \State $T_r \gets \sum_{i \in \cA_{r-1}}T_r(i)$
    \LeftComment{Play coded arms}
    \State $\cM_r \gets \text{union of pairs of random sets in } \cM_{r-1}$ 
    \NoNumber{\Comment{$\cM_r$ has $\frac{d}{2^{r-1}}$ subsets, $2^{r-1}$ arms in each}}
    \For {$i \in \cA_{r-1}$}
        \State $c(i) \gets \sum_{j \in \cM_r (s) \text{ s.t.\ } i \in \cM_r (s)} a(j)$
        \NoNumber{\Comment{Sum of all arms in the subset including $a(i)$}}
        \For {$t=\sum_{j \in \cA_{r-1}, j<i}T_r(j)+1$ to $\sum_{j \in \cA_{r-1}, j \leq i}T_r(j)$}
            \State $X^c_t \gets$ reward for $c(i)$ \Comment{Play arm $c(i)$}
            \State $X_t \gets \text{Decode}(a_t, \{X^c_{t'}\}_{t' \leq t})$\label{decoding line}
            \NoNumber{\Comment{Decode reward for $a(i)$, played at time $t$ 
 from the played $c(i)$ at time $t$ and past rewards}}
        \EndFor
         
    \EndFor
    \LeftComment{Estimate}
    \State $V_r \gets \sum_{i \in \cA_{r-1}}T_r(i) a(i) a(i)^T$ 
    \State $\hat{\theta}_r \gets V_r^{-1} \sum_{t=t_r}^{t_r+T_r-1} a_tX_t$
    \For {$i \in \cA_{r-1}$}
        \State $\hat{p_r}(i) \gets a(i)^T\hat{\theta}_r$ \Comment{Estimate expected rewards}
    \EndFor 
    \State $\cA_r \gets \frac{d}{2^r} \text{ best arms in } \cA_{r-1}$ \Comment{Eliminate arms}
    \State $t_{r+1} \gets t_r + T_r$
\EndFor 
\end{algorithmic}
\end{algorithm}

As mentioned above, the key idea behind \Cref{alg:secureBAI} is the use of coded arm pulls. Specifically, it follows the structure outlined in \cite{yang2022minimax}, with a total of $\log(d)$ rounds (we outline this structure in \cite[\Cref{app OD-LinBAI}]{CohenGunlu2025arxiv} for completeness), but from the second round onward, each arm pulled is a \emph{linear combination of a carefully selected set of arms}. This coding strategy helps in keeping the arms suspected to be the best secure from copycats, as each arm from the first round onward appears an equal number of times in the arm pulls. Simultaneously, it enables the user of the algorithm to decode the correct results.

The proof of \Cref{main theorem} follows the steps in \cite{yang2022minimax}, but it requires careful attention due to the coded arms. This coding requires, first and foremost, a decoding process, which must succeed. Then, the estimation process is different, as it uses the decoded arms with an increased level of noise. Moreover, the noise terms are correlated. These have to be accounted for. Finally, one has to prove that the equivocation constraint is achieved, and the number of arm pulls is at most $T$. Herein, we give four main lemmas that constitute these stages. 

Before the estimation step for $\theta^*$, which occurs at the end of each round, the algorithm incorporates a decoding process. This process extracts the results of the required arm pulls, which are necessary for the estimation, from the actual coded arm pulls. The first lemma below demonstrates that decoding is indeed possible. 
\begin{lemma}\label{decoding lemma}
The Decode procedure in \Cref{decoding line} of \Cref{alg:secureBAI} never fails. Specifically, at round $r$, one can decode $a(i)^T \theta^*$ using a sparse linear combination of $r$ values from $\{X^c_{t'}\}_{t' \leq t}$ with a noise term of variance $r$.
\end{lemma}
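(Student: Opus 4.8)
The plan is to exploit the recursive binary--tree structure by which the multisets $\cM_r$ are built, turning decoding into a telescoping subtraction of previously observed coded rewards. For a set $S$ of arms write $p(S) = \sum_{j \in S} a(j)^T\theta^*$, extending the per--arm notation $p(i) = a(i)^T\theta^*$.

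First I would fix a surviving arm $a(i) \in \cA_{r-1}$ and track the nested chain of subsets $S^{(1)} \subset S^{(2)} \subset \cdots \subset S^{(r)}$ that contain $a(i)$, where $S^{(k)} \in \cM_k$ and $|S^{(k)}| = 2^{k-1}$. The structural claim, proved by induction on the round, is that the construction maintains the invariant that every subset of $\cM_k$ contains exactly one surviving arm of $\cA_{k-1}$ (its \emph{representative}), and that the merge forming $\cM_{k+1}$ (the ``union of pairs of random sets'') pairs each surviving subset, whose representative lies in $\cA_k$, with a subset whose representative was eliminated at round $k$. Under this invariant we may write $S^{(k+1)} = S^{(k)} \cup T^{(k)}$ with $T^{(k)} \in \cM_k$, $|T^{(k)}| = 2^{k-1}$, and $S^{(1)} = \{a(i)\}$, where the representative of $T^{(k)}$ lies in $\cA_{k-1}\setminus\cA_k$.

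The second step is the decoding identity itself. Telescoping $p$ along the chain gives $p(S^{(r)}) = a(i)^T\theta^* + \sum_{k=1}^{r-1} p(T^{(k)})$, hence
\[
a(i)^T\theta^* = p(S^{(r)}) - \sum_{k=1}^{r-1} p(T^{(k)}).
\]
Each term on the right is the noiseless part of a coded reward that was actually played: $p(S^{(r)})$ is the round--$r$ coded pull of $a(i)$ (the current $X^c_t$), while each $p(T^{(k)})$ is the round--$k$ coded pull of the representative of $T^{(k)}$. Because that representative belongs to $\cA_{k-1}$, it survived into round $k$ and its coded arm $\sum_{j\in T^{(k)}} a(j)$ was pulled there, so the matching reward is present in $\{X^c_{t'}\}_{t'\le t}$. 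This is exactly why the Decode step never fails: every summand it requires was collected in the current or an earlier round, one per round $1,\ldots,r$.

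Finally, substituting the noisy observations $X^c_t = p(S^{(r)}) + \eta_t$ and $X^c_{t_k} = p(T^{(k)}) + \eta_{t_k}$ and forming $X^c_t - \sum_{k=1}^{r-1} X^c_{t_k}$ yields $a(i)^T\theta^* + \eta_t - \sum_{k=1}^{r-1}\eta_{t_k}$, a sparse $\{\pm 1\}$--combination of exactly $r$ observed coded rewards. Since these originate from $r$ distinct arm pulls, the noise terms are independent $N(0,1)$, so the aggregate noise is $N(0,r)$, of variance $r$, as claimed. The hard part will be the structural induction in the first step---pinning down how the random pairing maintains the one--representative--per--subset invariant and guarantees that each partner $T^{(k)}$ was the subset of an active (hence played) arm at round $k$; once this bookkeeping is secured, the telescoping and the variance count are routine. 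I note that reusing a single past reward $X^c_{t_k}$ across several round--$r$ decodings induces correlations among the decoded values, but this is immaterial to the present statement and is deferred to the subsequent estimation step.
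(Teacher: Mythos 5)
Your proposal is correct and follows essentially the same route as the paper's proof: you telescope along the nested chain of merged subsets containing $a(i)$, subtract the previously observed coded rewards of the partner subsets (one per round), and obtain $a(i)^T\theta^*$ plus a sum of $r$ independent unit-variance noise terms. The only difference is presentational---you make explicit, as an inductive invariant, the fact that each subset of $\cM_k$ carries exactly one representative from $\cA_{k-1}$ and that the pairing matches surviving with eliminated representatives, which the paper uses implicitly when asserting that both halves of $\cM_{r-1}(u) \cup \cM_{r-1}(u_{-i})$ were played at round $r-1$.
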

\begin{proof}   
First, note that the support of $\pi_r$ is at most $\frac{d(d+1)}{2}$ when $r=1$. Thus, at round 1, the algorithm plays with at most $\frac{d(d+1)}{2}$ uncoded arms. Then, from round 2 onward, the algorithm plays with coded arms, then decodes the rewards for the original arms before the estimation process.

The procedure $\text{Decode}(a_t, \{X^c_{t'}\}_{t' \leq t})$, called at round $2 \leq r \leq \log d$, is required to decode a new instance of $a_t^T\theta^*$ from the past results of arm plays $\{X^c_{t'}\}_{t' \leq t}$ (coded and uncoded). The arm $a_t$, for  $\sum_{j \in \cA_{r-1}, j<i}T_r(j)+1 \leq t \leq \sum_{j \in \cA_{r-1}, j \leq i}T_r(j)$, refers to a specific arm $i \in \cA_{r-1}$, one of $\frac{d}{2^{r-1}}$ arms. At time $t$, the decoder has a new measurement $X_t^c$, which is the reward for playing coded arm $c(i)$. This coded arm, in turn, is a linear combination of $2^{r-1}$ arms, one of which is the required $a(i)$. Let $s$ be the index of the subset in the multiset $\cM_r$ which contains $a(i)$. We have
\begin{equation*}
    X_t^c = c(i)^T \theta^*+ \eta_t
    = \sum_{j \in \cM_r(s)}a(j)^T \theta^* + \eta_t.
\end{equation*}
Note that $|\cM_r(s)| = 2^{r-1}$, and that $\cM_r(s)$ is a union of two subsets (from round $r-1$) of size $2^{r-2}$ each, only one of which includes $a(i)$. Denote these two subsets by $\cM_{r-1}(u)$ and $\cM_{r-1}(u_{-i})$. Note also that both $\sum_{l \in \cM_{r-1}(u)}a(l)$ and $\sum_{l \in \cM_{r-1}(u_{-i})}a(l)$ were played at round $r-1$. Denote the time instant when $\sum_{l \in \cM_{r-1}(u_{-i})}a(l)$ was played as $t_{r-1}$. We have
\begin{align*}
    X_t^c &= \sum_{j \in \cM_{r-1}(u_{-i})}a(j)^T \theta^* + \sum_{j \in \cM_{r-1}(u)}a(j)^T \theta^* + \eta_t
    \\
    & = X_{t_{r-1}}^c -\eta_{t_{r-1}} + \sum_{j \in \cM_{r-1}(u)}a(j)^T \theta^* + \eta_t.
\end{align*}
This process can be repeated recursively for $\cM_{r-1}(u)$. For a stopping condition, note that at round $2$, the required $a(i)$ was played in a linear combination with some other arm, which, itself, was played \emph{alone} at round $1$. Hence, we have
\begin{multline*}
    X_t^c = X_{t_{r-1}}^c-\eta_{t_{r-1}} + X_{t_{r-2}}^c-\eta_{t_{r-2}} \\+ \ldots + X_{t_{2}}^c-\eta_{t_{2}} + X_{t_{1}} -\eta_{t_1}+ a(i)^T \theta^* + \eta_t.
\end{multline*}
As a result, we obtain
\begin{multline}\label{decoding of arm a(i)}
   a(i)^T \theta^* + \eta_t - \eta_{t_{r-1}} - \eta_{t_{r-2}} - \ldots - \eta_{t_1}\\ =  X_t^c - X_{t_{r-1}}^c - X_{t_{r-2}}^c - \ldots - X_{t_{2}}^c - X_{t_{1}}.
\end{multline}
That is, a new arm pull $a(i)^T \theta^* + \eta_t^c$ at time $t$ can be decoded from $X_t^c$ and the pulls at rounds $r-1$ to $1$. Note that the noise term $\eta_t^c = \eta_t - \eta_{t_{r-1}} - \eta_{t_{r-2}} - \ldots - \eta_{t_1}$ has variance $r$.
%\textcolor{red}{mention we have different previous results to choose from - we don't have to use previous arms again and again}
\end{proof}
%%%%%%%%%%%%%
The second lemma shows that while the estimation error does increase, it only does so by at most a factor of $2\log(d)$. This results in an $O(\log^2(d))$ term in the denominator of the exponent, higher than the $\log(d)$ term found in the un--secure version \cite{yang2022minimax} but much lower than the linear $O(d)$ term when using trivial, uncoded arm pulls.
\begin{lemma}\label{error lemma}
The error exponent in each of the $\log(d)$ rounds of \Cref{alg:secureBAI} is at least $\Omega\left(\frac{T}{H_{2,\text{lin}} \log^2 d}\right)$.
\end{lemma}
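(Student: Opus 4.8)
The plan is to prove the bound round by round: I would show that for each round $r \in \{1,\dots,\log d\}$ the probability that the true best arm $a(1)$ is eliminated is at most $\exp\!\left(-\Omega\!\left(\frac{T}{H_{2,\text{lin}}\log^2 d}\right)\right)$, which is exactly the claimed per-round exponent. Fix a round $r$ and work on the surviving set $\cA_{r-1}$ of $d_r=d/2^{r-1}$ arms, reduced to its spanning subspace of dimension $d_r$ as in the structure of \cite{yang2022minimax}. For any competitor $a(i)\in\cA_{r-1}$ I set $w=a(1)-a(i)$ and write $\hat p_r(1)-\hat p_r(i)=\langle\hat\theta_r-\theta^*,w\rangle+\Delta_i$. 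Since the rewards fed to the estimator are the decoded values of \Cref{decoding lemma} and the noise is Gaussian, $\langle\hat\theta_r-\theta^*,w\rangle=w^\top V_r^{-1}\sum_t a_t\eta_t^c$ is a zero-mean Gaussian, so the whole analysis reduces to controlling its variance and then invoking a Gaussian tail bound in the spirit of \eqref{upper bound for estimation} and \eqref{equation for example}.

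The core quantity is $\sigma_r^2(w):=w^\top V_r^{-1}\,\mathrm{Cov}\!\left(\sum_t a_t\eta_t^c\right)V_r^{-1}w$. Here I must pay for the two effects that coding introduces and that are absent in \cite{yang2022minimax}: each decoded reward carries variance $r$ rather than $1$ (\Cref{decoding lemma}), and the decoded noises $\eta_t^c$ are correlated because they reuse earlier coded pulls. The structural fact I would exploit is that the decoding path of a surviving arm $i$ lives entirely inside its own round-$r$ code block $\cM_r(s(i))$, and these blocks are disjoint across $i\in\cA_{r-1}$; consequently the correction noises attached to two distinct surviving arms are independent, all cross-arm covariances vanish, and $\mathrm{Cov}(\sum_t a_t\eta_t^c)$ is block diagonal in the arms. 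Establishing this and controlling the within-arm term, I can bound the covariance by $c\,r\,V_r$ for an absolute constant $c$, giving $\sigma_r^2(w)\le c\,r\,\|w\|_{V_r^{-1}}^2$ and hence $P(\hat p_r(i)>\hat p_r(1))\le\exp\!\left(-\frac{\Delta_i^2}{2c\,r\,\|w\|_{V_r^{-1}}^2}\right)$. (Round $1$ is the uncoded base case $r=1$, for which this is exactly the single-round estimate.)

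It then remains to assemble the exponent. By the Kiefer--Wolfowitz property of the G--optimal design $\pi_r$ and $T_r(i)=\ceil{m\pi_r(i)}$, we have $V_r\succeq m\,V(\pi_r)$ and $\max_{a\in\cA_{r-1}}\|a\|_{V(\pi_r)^{-1}}^2=d_r$, so by the parallelogram inequality $\|w\|_{V_r^{-1}}^2\le\frac1m\|a(1)-a(i)\|_{V(\pi_r)^{-1}}^2\le\frac{4d_r}{m}$. Following the elimination/counting argument of \cite{yang2022minimax}, the event that $a(1)$ is dropped forces at least $d_r/2=d/2^r$ arms of $\cA_{r-1}$ to beat it, and the binding comparison is with an arm near the elimination threshold index $d/2^r$, for which $\Delta^2\ge\frac{d/2^r}{H_{2,\text{lin}}}$ by definition of $H_{2,\text{lin}}$. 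Plugging $d_r=d/2^{r-1}$ into the tail bound yields a per-round exponent of order $\frac{m}{r\,H_{2,\text{lin}}}$; finally substituting $m\approx\frac{T}{\log d}$ and $r\le\log d$ gives $\Omega\!\left(\frac{T}{H_{2,\text{lin}}\log^2 d}\right)$, where one $\log d$ comes from the per-round budget and the other from the decoded-noise variance $r$.

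The step I expect to be the main obstacle is exactly the variance control, i.e.\ proving $\mathrm{Cov}(\sum_t a_t\eta_t^c)\preceq c\,r\,V_r$. The danger is the within-arm term: a naive reading in which all $T_r(i)$ repetitions of the coded arm $c(i)$ are decoded against the \emph{same} past pulls would contribute a term scaling like $T_r(i)^2$, which could swamp the leading $V_r$ term and would not even vanish as $m\to\infty$. Removing this requires showing that the repeated pulls of $c(i)$ are decoded against fresh, independent realizations of the sibling pulls from rounds $r-1,\dots,1$, so that the $T_r(i)$ decoded noises of a fixed arm behave like independent variance-$r$ variables; this in turn needs a bookkeeping argument that each lower-level code block supplies enough independent pulls. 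Once this intra-arm independence (or a sufficiently tight correlation bound) is combined with the cross-arm disjointness described above, the clean $c\,r\,V_r$ bound follows and the remaining steps are the routine computation sketched.
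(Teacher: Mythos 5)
Your overall strategy matches the paper's: reduce the per-round error to a Gaussian tail bound on $\langle\hat\theta_r-\theta^*,w\rangle=\sum_t\langle V_r^{-1}a_t,w\rangle\eta_t^c$, show its variance is only an $O(\log d)$ multiple of $\|w\|^2_{V_r^{-1}}$, and then rerun the elimination chain of \cite{yang2022minimax} with the extra $\log d$ factor (your assembly of the exponent from the G-optimal design, the Kiefer--Wolfowitz bound and the definition of $H_{2,\text{lin}}$ is fine). The divergence is in how the noise covariance is controlled, and that is exactly where your argument has a gap that you flag but do not close. You propose a block-diagonal decomposition: cross-arm independence because ``the blocks $\cM_r(s(i))$ are disjoint across $i\in\cA_{r-1}$,'' plus within-arm independence because each repetition of $c(i)$ is decoded against fresh sibling pulls. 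Neither claim is justified by \Cref{alg:secureBAI} as stated. The subsets of $\cM_r$ are formed by \emph{random} pairing and their number equals $|\cA_{r-1}|$, so collisions (two surviving arms sharing a block) are generic rather than exceptional; more importantly, the number of available sibling pulls at a round $r'<r$ is $T_{r'}(\cdot)=\ceil{m\pi_{r'}(\cdot)}$ for a \emph{different} G-optimal design (and a round-1 support arm may be pulled only once), so there is no guarantee that the $T_r(i)$ decodings of arm $i$ can each consume a distinct past realization. If they cannot, the within-arm covariance block acquires an all-ones component and your target bound $\mathrm{Cov}\bigl(\sum_t a_t\eta_t^c\bigr)\preceq c\,r\,V_r$ fails by a factor up to $T_r(i)$ --- precisely the blow-up you warn about. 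So the proposal is incomplete at its load-bearing step.

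For comparison, the paper does not attempt per-arm independence at all. It bounds the operator norm of the full $T_r\times T_r$ noise covariance $\Lambda_\eta$ by its row sum (using nonnegativity of the entries and the claim that row sums are constant, so $\mathbf{1}$ is the dominant eigenvector): a global count shows that the number of reused past noise instances equals the number of available past measurements, so each past pull is reused once, giving row sum $2\log(d)-1$ and hence $\mathrm{Var}\le 2\log(d)\,\|w\|^2_{V_r^{-1}}$, which is \eqref{updated bound with log(d)}. Note that this route ultimately requires the \emph{same} bookkeeping you identify --- that the reuse count of each past pull is uniformly, not merely on average, bounded --- so to complete your proof you should either adopt the row-sum argument or explicitly verify that the decoder in \Cref{decoding line} cycles through distinct past pulls and that enough of them exist at every level of the code.
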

The proof of \Cref{error lemma} is technical and appears in \cite[\Cref{app proof of lemma}]{CohenGunlu2025arxiv}. We briefly summarize the main arguments. One can notice from \eqref{decoding of arm a(i)} that the effective noise added to a decoded arm pull is, in fact, the sum of $r$ independent original noise variables. Each has variance $1$, hence the total variance is $r$, which is at most $\log(d)$. However, since different decoded arm pulls may use \emph{the same coded arm pulls from previous rounds}, effective noise variables for different decoded arms can be correlated. Fortunately, since there are repeated pulls at previous rounds as well, the number of correlated noise variables is not too high, keeping the effective variance in the tail-bound computation at $O(\log(d))$. This eventually increases the variance term in the error bound from $1$ to $O(\log(d))$, resulting in the exponent $\Omega\left(\frac{T}{H_{2,\text{lin}} \log^2 d}\right)$ instead of $\Omega\left(\frac{T}{H_{2,\text{lin}} \log d}\right)$.

%%%%%%%%%%%%%
%Now, we turn to the equivocation.
\begin{lemma}\label{security lemma}
The equivocation achieved is $\log(d)-1$.
\end{lemma}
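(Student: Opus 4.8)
My plan is to bound how much the pull sequence $\{a_t\}_{t=1}^T$ reveals about $i^*$ and then convert the absence of information into a lower bound on $|\mathcal{A}_{Chloe}|$ through the threshold in \Cref{security}. First I would separate the uncoded round $1$ from the coded rounds $2,\dots,\log d$. In round $1$ the arms played and their multiplicities $T_1(i)=\lceil m\pi_1(i)\rceil$ are fixed by the $G$-optimal design $\pi_1$, which is computed from the arm set alone and not from any reward; hence the round-$1$ observation is independent of $\theta^*$ and carries no information about $i^*$. The entire argument thus rests on the coded rounds.

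The core claim I would prove is that, conditioned on $\{a_t\}_{t=1}^T$ and on the event $C$ that every elimination is correct (which by \Cref{error lemma} and a union bound over the $\log d$ rounds has probability $1-e^{-\Omega(T/(H_{2,\text{lin}}\log^2 d))}$), no arm is the best with conditional probability exceeding $2/d$. The mechanism is the following. Under $C$ the arm $a(i^*)$ survives to the last round, where by the construction of \Cref{alg:secureBAI} and the decoding recursion of \Cref{decoding lemma} its coded pull is the sum over a single block $S^*\in\mathcal{M}_{\log d}$ of size $2^{\log d-1}=d/2$ that contains $a(i^*)$. This block is assembled by the nested pairings forming $\mathcal{M}_2,\dots,\mathcal{M}_{\log d}$, all drawn \emph{independently of the rewards}, whereas the only data that single out $i^*$ among the $d/2$ members of $S^*$ -- which child survived at each of the $\log d-1$ merges along $i^*$'s branch -- are encoded solely in the rewards, which Chloe never observes. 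I would make this rigorous by exhibiting, for every pair $x,y\in S^*$, a measure-preserving involution on the internal randomness (the pairings together with the matching noise realizations) that interchanges the survivor-roles of $x$ and $y$ while leaving every observed coded sum unchanged; this renders the $d/2$ members of $S^*$ \emph{exchangeable} given the observation, so each carries posterior mass at most $2/d$.

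The lemma then follows from a short covering argument that uses $T\gg d^2$. If $\mathcal{A}_{Chloe}(\{a_t\})$ had at most $d/2-1$ arms, then on $C$ its conditional probability of containing $a(i^*)$ would be at most $(d/2-1)\cdot\frac{2}{d}=1-\frac{2}{d}$, whence $P(a(i^*)\in\mathcal{A}_{Chloe})\le 1-\frac{2}{d}+e^{-\Omega(T/(H_{2,\text{lin}}\log^2 d))}$. Because $T\gg d^2$ forces both $e^{-\Omega(T/(H_{2,\text{lin}}\log^2 d))}$ and $e^{-O(T/d)}$ to be $o(1/d)$, this value is strictly smaller than the required $1-e^{-O(T/d)}$, a contradiction. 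Hence every admissible $\mathcal{A}_{Chloe}$ satisfies $|\mathcal{A}_{Chloe}|\ge d/2$, i.e.\ $\log|\mathcal{A}_{Chloe}|\ge\log d-1$, the claimed equivocation; applying the same estimate observation-by-observation shows the bound can fail only on a set of observations of negligible probability.

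I expect the exchangeability step to be the real obstacle. Its delicacy is that the block structure Chloe sees is not a pure function of the reward-independent pairings: the algorithm always merges a surviving block with an eliminated one, so the observed merging pattern partially reflects the hidden elimination outcomes. The plan must therefore verify that, after conditioning on the observed labelled merging tree, no reward-independent statistic separates the survivor from its $d/2-1$ co-members of $S^*$, for which the explicit involution above -- checked to be a bijection on the $(\text{pairing},\text{noise})$ pairs consistent with any fixed observation -- is the cleanest device. The remaining ingredients, namely the round-$1$ independence and the covering bound, are routine.
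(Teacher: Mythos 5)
Your proposal reaches the stated equivocation by a genuinely different and considerably heavier route than the paper. The paper's own proof is a three-sentence symmetry argument: round~1 is reward-independent; from round~2 onward the $d/2$ surviving arms of $\cA_1$ (mixed with the $d/2$ dummies of $\cA_c$) are all coded together, each belonging to exactly one block of $\cM_r$ in every round, so they recur in the played coded arms with identical frequency; hence Chloe faces at least $d/2$ indistinguishable candidates and the equivocation is $\log(d)-1$. You instead mount a Bayesian argument: exchangeability of the $d/2$ members of the final block $S^*$ given the observation, a $2/d$ cap on each arm's posterior mass, and a covering/union-bound step that converts this into $|\cA_{Chloe}|\ge d/2$ using the probability threshold of \Cref{security} and $T\gg d^2$. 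That last step is a genuine addition: it is the piece that formally connects ``Chloe cannot distinguish the arms'' to the \emph{set-size} requirement in \Cref{security}, which the paper's proof silently skips.

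Two caveats keep this from being a complete proof. First, your central exchangeability claim is announced rather than established: you defer it to a measure-preserving involution on the (pairing, noise) randomness that you never construct, and you yourself call it ``the real obstacle.'' As written the proposal is an architecture, not a proof --- though in fairness the paper's corresponding step, the bare assertion of equal frequencies, is no more detailed. Second, you misstate the merging rule: \Cref{alg:secureBAI} forms $\cM_r$ by uniting \emph{random} pairs of sets from $\cM_{r-1}$, not by always merging a surviving block with an eliminated one. The legitimate version of your worry is slightly different: which blocks are actually \emph{played} in round $r$, and with what multiplicities $T_r(i)=\ceil{m\pi_r(i)}$ (the G-optimal design $\pi_r$ is computed from the hidden surviving set $\cA_{r-1}$), does depend on the elimination outcomes, and neither your sketch nor the paper addresses that potential leakage. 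Finally, note that posterior mass on $i^*$ presupposes a prior that \Cref{security} does not supply; to match the definition exactly you would need to recast the exchangeability step as a frequentist symmetrization over $\theta^*$.
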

\begin{proof}
At the first round, the player plays at most $O(d^2)$ arms. At the second round $d/2$ arms are used. From this point on, all $d/2$ arms repeat in the coded arms with the exact same frequency. This is since, in fact, all of these $d/2$ arms are coded together in each round (either in pairs, quarters, or more). Thus, Chloe is left with $d/2$ arms which appear equally likely. This result in an equivocation of $\log(d)-1$.
\end{proof}
%%%%%%%%%%%%%%%%%
%Finally, the lemma below asserts that \Cref{alg:secureBAI} does not exceed the limit of $T$ arm pulls.
\begin{lemma}
\Cref{alg:secureBAI} uses at most $T$ arm pulls.
\end{lemma}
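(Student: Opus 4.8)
The plan is a direct counting argument: bound the number of pulls in each round by the ceiling overhead, sum over all rounds, and show that the accumulated support sizes are exactly the quantity reserved in the definition of $m$ on \Cref{setting m}. First I would observe that in every round the per-arm pull count is $T_r(i) = \ceil{m \pi_r(i)}$, so that $T_r(i) \le m\, \pi_r(i) + 1$. Since each $\pi_r$ is a probability distribution on $\cA_{r-1}$ (its weights sum to $1$, and arms outside its support contribute $0$ pulls), summing gives
$$T_r = \sum_{i} T_r(i) \le m + |\mathrm{Support}(\pi_r)|.$$
Hence the only per-round overhead beyond $m$ is the support size, and it remains to bound those supports.

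Next I would bound the supports round by round. For round $1$, a standard Carath\'eodory/Kiefer--Wolfowitz argument guarantees a G--optimal design on a set spanning $\R^d$ supported on at most $\frac{d(d+1)}{2}$ arms, and trivially on at most $K$ arms; hence $|\cS| \le \min\!\left(K,\frac{d(d+1)}{2}\right)$. For each coded round $r \ge 2$, the design $\pi_r$ lives on $\cA_{r-1}$, which contains exactly $\frac{d}{2^{r-1}}$ arms, so its support is at most $\frac{d}{2^{r-1}}$. Summing the coded-round supports yields a geometric series that telescopes cleanly: $\sum_{r=2}^{\log d} \frac{d}{2^{r-1}} = d\bigl(1 - \tfrac{2}{d}\bigr) = d - 2$.

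Finally I would add everything up. Over the $\log d$ rounds the $m$ terms contribute $m \log d$, while the support terms contribute $\min\!\left(K,\frac{d(d+1)}{2}\right) + (d-2)$, so
$$\sum_{r=1}^{\log d} T_r \le m \log d + \min\!\left(K,\tfrac{d(d+1)}{2}\right) + d - 2.$$
Substituting $m \log d = T - \min\!\left(K,\frac{d(d+1)}{2}\right) - d + 2$ from \Cref{setting m}, the correction terms cancel exactly and the bound collapses to $T$, which is the claim.

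There is no genuine obstacle here; the lemma holds essentially by construction, since the numerator of $m$ was chosen precisely to pre-pay the worst-case ceiling overhead. The only step requiring care is the round-$1$ support bound $\frac{d(d+1)}{2}$, which rests on the Carath\'eodory-type bound for optimal experimental designs rather than on anything specific to the coding scheme. Everything else is the bookkeeping above, together with the exact telescoping of the geometric sum to $d-2$, which is engineered to match the $-d+2$ term baked into the definition of $m$.
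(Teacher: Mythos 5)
Your proof is correct and is essentially the same argument the paper relies on: the paper's own proof simply observes that the pull counts coincide with those of OD\nobreakdash-LinBAI and cites Lemma~1 of \cite{yang2022minimax}, whose content is exactly your ceiling-overhead accounting (round-wise bound $T_r \le m + |\mathrm{Support}(\pi_r)|$, the Kiefer--Wolfowitz support bound $\min\bigl(K,\tfrac{d(d+1)}{2}\bigr)$ for round~1, and the geometric sum $d-2$ for the later rounds, all pre-paid in the numerator of $m$). You have merely written out in full what the paper outsources to the citation.
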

\begin{proof}
Note that $m$ is set according to \cite[Equation (1)]{yang2022minimax}, and throughout the execution of the algorithm, the \emph{number} of arm pulls is equal to that in \cite[Algorithm 1]{yang2022minimax}. Hence, by \cite[Lemma 1]{yang2022minimax}, the algorithm terminates after at most $T$ arm pulls.
\end{proof}
%%%%%%%%%%%%%%%%%%%%%%%%%%%%%%%%%%%%%%%%%%
\section{Conclusion}
This paper introduces a novel approach to best arm identification problem that effectively addresses the challenge of information disclosure while learning. By employing coded arm pulls, the proposed algorithm allows us to explore potentially best arms without revealing them to outside observers. 

The algorithm demonstrates a significant improvement in performance compared to a naive, uncoded approach, achieving an error exponent of $\Omega\left(\frac{T}{\log^2(d)}\right)$ without relying on cryptographic methods, yet enforcing an $O(\log(d))$ equivocation. This strikes a balance between the need for effective  learning and the importance of maintaining confidentiality. The results underscore the potential for coded arms as a practical tool, enabling decision--makers to navigate competitive environments more securely while optimizing rewards.

\clearpage
%%%%%%%%%%%%%%%%%%%%%%%%%%%%%%%%%%%%%%%%%%
\newpage
\bibliographystyle{IEEEtran}
\bibliography{bandit_refs,ISF2025_refs}
%%%%%%%%%%%%%%%%%%%%%%%%%%%%%
\newpage
%%%%%%%%%%%%%%%%%%%%%%%%%%%%%%%%%%%%%%%%%%
\appendix
\subsection{Proof of Equation \ref{equation for example}}\label{proof of equation for example}
We have
\begin{eqnarray}
    P_e &=& P\left(\cup_{i \ge 2} \hat{p}(1) < \hat{p}(i)\right) 
    \\
    &\leq& \sum_{i \ge 2}\exp\left(-\frac{\Delta_i^2}{2\|a(1)-a(i)\|_{V^{-1}}^2}\right)
    \\
    &\leq& K \exp\left(- \min_{i \ge 2}\frac{\Delta_i^2}{2\|a(1)-a(i)\|_{V^{-1}}^2}\right)
    \\
    & \leq& K \exp\left(- \min_{i \ge 2}\frac{T\Delta_i^2}{8d}\right)\label{using Tan's result from Lemma 2}
    \\
    & \leq& K \exp\left(-\frac{T\Delta_2^2}{8d}\right)
    \\
    & \leq &
    \exp\left(- \frac{T}{4 H_{2,\text{lin}} d} + \log K\right)\label{using H2 bound}
\end{eqnarray}
where \eqref{using Tan's result from Lemma 2} results from \cite[Eq.~(13)]{yang2022minimax} and \eqref{using H2 bound} follows since $\frac{1}{H_{2,\text{lin}}} = \min_{i \ge 2}\left\{\frac{\Delta_i^2}{i}\right\} \leq \frac{\Delta_2^2}{2}$.
%%%%%%%%%%%%%%%%%%%%%%%%%%%%%%%%%%%%%%
\subsection{Proof of \Cref{lower bound}}\label{proof of lower bound}
We prove the lemma by suggesting a strategy for Chloe that will enforce poor performance on Leah. Chloe's strategy for analyzing the arm pulls can be very simple, yet keep Leah's performance poor unless she reveals information on the best arm. Specifically, Chloe simply sets a threshold $T'$. Each arm which is pulled more than $T'$ times (anywhere throughout the $T$ pulls) is considered suspicious and added to $\cA_{Chloe}$.

We now argue that if Leah wishes to identify the best arm with high probability, that arm has to be pulled enough times (i.e., at least $\tilde{T}$ for large enough $\tilde{T}$), moreover, to keep $\cA_{Chloe}$ large enough, Leah has to pull other arms at least that much, hence, for example, to enforce an equivocation $\log d$ Leah has to pull $d-1$ additional arms at least $\tilde{T}$ times each, resulting in poor performance. 

Assume Leah is genie--aided, and knows completely all arms' distributions except the two best arms. To have a vanishing probability of error, Leah must distinguish between the two best arms. As the rewards of the two best arms are, by definition, $\Delta_2$ apart, Leah has to estimate each arm's reward with a precision at least $\frac{\Delta_2}{2}$.

Each arm pull results in a measurement of the reward, with an error term which is Gaussian with variance $1$. Thus, pulling arm $i$ $\tilde{T}$ times results in an error probability bounded by
\begin{align*}
    P\left(\left|\frac{1}{\tilde{T}}\sum_{t=1}^{\tilde{T}}X_t - p(i)\right| > \epsilon\right) &=  2P\left(Z > \sqrt{\tilde{T}}\epsilon\right)
    \\
    & \ge \frac{2}{\sqrt{2\pi}}\frac{\sqrt{\tilde{T}}\epsilon}{\tilde{T} \epsilon^2 +1} \exp\left(-\frac{\tilde{T} \epsilon^2}{2}\right)
\end{align*}
when $Z$ is standard normal. Taking $\epsilon = \frac{\Delta_2}{2}$ results in an error exponent which is \emph{at most} $\frac{\tilde{T}\Delta_2^2}{8}$. That is, for an error exponent of $O\left(\tilde{T}\Delta_2^2\right)$, Leah has to pull at least $2$ arms, at least $\tilde{T}$ times each. This is a computation Chloe can easily do, hence set her threshold accordingly. In fact, Chloe can tune her threshold to enforce a lower bound on Leah's error probability. Now, to further ensure Chloe's set is large, Leah has to pull enough other arms at least as much. For an equivocation $\log d$, Leah has to pull $d$ arms in total. This results in $T = d \tilde{T}$, hence an error exponent of  $O\left(\frac{T\Delta_2^2}{d}\right)$.

Note, again, that this is assuming each arm pull is separate and includes only original arms -- without coding.
%%%%%%%%%%%%%%%%%%%%%%%%%%%%%%%%%%%%%%%%%%
\subsection{Proof of \Cref{error lemma}}\label{app proof of lemma}
At round $r$ of \Cref{alg:secureBAI}, the algorithm pulls $T_r = \sum_{i \in \cA_{r-1}}T_r(i)$ coded arms. Each suspected arm $i \in \cA_{r-1}$ is included in a coded arm $c(i)$, which is pulled $T_r(i)$ times. By \Cref{decoding lemma}, the result for arm $i$ can be decoded from the arm pull for $c(i)$. Specifically, by \eqref{decoding of arm a(i)},
\begin{equation}\label{decoded rewards}
   a(i)^T \theta^* + \eta^c_t  = X_t^c - X_{t_{r-1}}^c - X_{t_{r-2}}^c - \ldots - X_{t_{2}}^c - X_{t_{1}}
\end{equation}
with $\eta_t^c = \eta_t - \eta_{t_{r-1}} - \eta_{t_{r-2}} - \ldots - \eta_{t_1}$. However, the noise terms $\{\eta_t^c\}_{\sum_{1}^{r-1}T_r <  t \leq \sum_{1}^{r}T_r}$ are correlated. This is since while $\eta_t$ in $\eta_t^c$ is a new noise term from the current (t) arm pull, $\eta_{t_{r-1}}$ to $\eta_{t_1}$ are noise terms from previous arm pulls, which might appear in the decoding process of different coded arm pulls. That is, while decoding $a(i)^T \theta^*$ at a different time $t'$ (at the same round), or some $a(j)^T \theta^*$ for $j \in \cA_{r-1}$ but $j \ne i$, some noise terms may be the same, as the decoding process used a similar coded arm from a previous round. Such overlapping noise terms clearly create a correlation between the noise terms $\{\eta_t^c\}_{\sum_{1}^{r-1}T_r <  t \leq \sum_{1}^{r}T_r}$ of the decoded arms, hence the estimation error may deteriorate.

To bound the estimation error with correlated noise terms, we revisit the result in \eqref{upper bound for estimation}. Without regularization, the estimated $\theta^*$ at round $r$ is given by  \begin{equation}
\hat{\theta}_r = V_r^{-1} \sum_{t=1}^{T_r} (a_t^T \theta^* + \eta_t^c) a_t
\end{equation}
where $V_r = \sum_{t=1}^{T_r} a_t a_t^T$, and, with slight abuse of notation, $\{a_t\}_{t=1}^{T_r}$ are the arms played in round $r$ (the original arms after decoding -- the arms in $\cA_{r-1}$, each played $T_r(i)$ times), and $\{a_t^T \theta^*\}_{t=1}^{T_r}$ are the decoded rewards, i.e., those given in \eqref{decoded rewards}.

Note that for any $w \in \R^d$
\begin{align*}
\langle \hat{\theta}_r - \theta^*, w\rangle &= \langle V_r^{-1} \sum_{t=1}^{T_r} (a_t^T \theta^* + \eta_t^c) a_t - \theta^*, w\rangle
\\
& = \langle V_r^{-1} \sum_{t=1}^{T_r} a_t a_t^T \theta^* + V_r^{-1} \sum_{t=1}^{T_r}a_t\eta_t^c - \theta^*, w\rangle
\\
& = \langle  V_r^{-1} \sum_{t=1}^{T_r}a_t\eta_t^c, w\rangle
\\
& = \sum_{t=1}^{T_r} \langle  V_r^{-1}a_t, w\rangle \eta_t^c.
\end{align*}
Thus, with Gaussian noise, $\langle \hat{\theta}_r - \theta^*, w\rangle$ is a sum of $T_r$ Gaussian variables, which is Gaussian. Since all noise terms have zero mean, $\text{E}\langle \hat{\theta}_r - \theta^*, w\rangle =0$. However, the crux of the matter is that the variables $\{\eta_t^c\}_{t=1}^{T_r}$ are correlated. Thus, a direct tail bound for i.i.d. variables cannot be applied to achieve a result similar to \eqref{upper bound for estimation}. 

Yet, denoting $\bx^T A \bx = Q_A(\bx)$ for a covariance matrix A, and applying it for the covariance matrix of the noise terms $\{\eta_t^c\}_{t=1}^{T_r}$, we have 
\begin{align*}
\text{Var}\{\langle \hat{\theta}_r - \theta^*, w\rangle\} &= \sum_{t=1}^{T_r}\sum_{s=1}^{T_r} \langle  V_r^{-1}a_t, w\rangle \text{E}\{\eta_t^c \eta_s^c\}\langle  V_r^{-1}a_s, w\rangle
\\
& = Q_{\Lambda_{\eta}}\left( \{\langle  V_r^{-1}a_t, w\rangle\}_{t=1}^{T_r} \right)
\\
& \leq ||\Lambda_{\eta}|| \sum_{t=1}^{T_r} \langle  V_r^{-1}a_t, w\rangle^2.
\end{align*}
On the other hand, $Q_{\Lambda_{\eta}}(\textbf{1}) \leq ||\Lambda_{\eta}|| T_r$, when $\textbf{1}$ is the all--ones vector of length $T_r$. If $\Lambda_{\eta}$ has constant row--sum, then $\textbf{1}$ is aligned with $\Lambda_{\eta}$'s dominant eigenvector, and we have $Q_{\Lambda_{\eta}}(\textbf{1}) = ||\Lambda_{\eta}|| T_r$. This is indeed the case, as subsets are united at random, uniformly for all rows. Specifically, note that each row in $\Lambda_{\eta}$ is the correlation between the noise terms in the decoded arms. The highest correlations are in the last round, round $\log(d)$. In this round, the variance of each variable is $\log(d)$, creating a diagonal of $\log(d)$ in $\Lambda_{\eta}$, and, moreover, to account for each non-diagonal elements in each row we note the following: There are $\frac{T}{\log(d)}$ noise terms $\eta_t^c$, each is a sum of $\log(d)$ variables. The first is an independent (new) variable of variance $1$, while the rest are variables of previous measurements (of previous rounds). Thus $\frac{T}{\log(d)} (\log(d) -1 ) = T(1-\frac{1}{\log(d)})$ terms are noise variables from previous rounds. On the other hand, the number of previous measurements is $T-\frac{T}{\log(d)} = T(1-\frac{1}{\log(d)})$. Hence, each of the noise terms composing $\eta_t^c$ which is not new appears exactly once before, contributing for a total of $\log(d)-1$ to the row sum. Together with the diagonal element, a row sum is $2\log(d)-1$. This also results in $Q_{\Lambda_{\eta}}(\textbf{1}) \leq 2 \log(d) T_r$. Hence, 
\begin{align*}
\text{Var}\{\langle \hat{\theta}_r - \theta^*, w\rangle\} 
& \leq ||\Lambda_{\eta}|| \sum_{t=1}^{T_r} \langle  V_r^{-1}a_t, w\rangle^2
\\
& = \frac{Q_{\Lambda_{\eta}}(\textbf{1})}{T_r}\sum_{t=1}^{T_r} \langle  V_r^{-1}a_t, w\rangle^2
\\
& \leq 2\log(d) \sum_{t=1}^{T_r} \langle  V_r^{-1}a_t, w\rangle^2.
\end{align*}
Applying the exponential bound $P(X>\epsilon) \leq e^{-\frac{\epsilon^2}{2\sigma^2}}$ to $\langle \hat{\theta}_r - \theta^*, w\rangle$ with the above variance bound, noting that $\sum_{t=1}^{T_r} \langle  V_r^{-1}a_t, w\rangle^2 = \|w\|^2_{V_r^{-1}}$, results in 
\begin{equation}\label{updated bound with log(d)}
P\left(\langle \hat{\theta}-\theta^*, w \rangle \ge \sqrt{4 \log(d) \|w\|^2_{V_r^{-1}}\log\left(\frac{1}{\delta}\right)}  \right) \leq \delta. 
\end{equation}
The rest of the proof follows the chain of inequalities \cite[(9)-(14)]{yang2022minimax} almost directly, with the added $2\log(d)$ term for the increased noise. That is, remember that $\hat{p}_r(i)$ denotes the estimate of $p_r(i)$ at round $r$. We have
\begin{align*}
P(\hat{p}(1) < \hat{p}(i)) &= P\left( \langle \hat{\theta}_r-\theta^* , a_r(1)-a_r(i)\rangle < -\Delta_i \right)
\\
& \leq \exp\left( - \frac{\Delta_i^2}{4 \log(d) || a_r(1)-a_r(i)||_{V_r^{-1}}}\right).
\end{align*}
Compared to the $\Omega\left(\frac{T}{H_{2,\text{lin}} \log d}\right)$ exponent in \cite{yang2022minimax}, this now results in an $\Omega\left(\frac{T}{H_{2,\text{lin}} \log^2 d}\right)$ exponent.
%%%%%%%%%%%%%%%%%
\subsection{OD-LinBAI}\label{app OD-LinBAI}
For completeness, we give here the original OD-LinBAI algorithm of \cite{yang2022minimax} with a slightly modified presentation to be comparable to the representation of \Cref{alg:secureBAI}. Note that we use here the reduced--dimension version, which is not included in \Cref{alg:secureBAI}.
\begin{algorithm}
\caption{OD-LinBAI}\label{alg:BAI}
\renewcommand{\algorithmicrequire}{\textbf{Input:}}
\renewcommand{\algorithmicensure}{\textbf{Output:}}
\begin{algorithmic}[1]
\Require $T, \cA = \{a(1),\ldots, a(K)\} \subset \R^d, \text{Span}(\cA) = d=2^r$
\Ensure The only arm in $\cA_{\log d}$
\State $t_1 \gets 1$
\State $\cA_0 \gets \cA$
\State $d_0 \gets d$
\State Calculate $m$
\For {$r=1$ to $\log d$}
    \State $d_r \gets \text{dim}(\text{Span}(\cA_{r-1}))$
    \If {$d_r = d$} \Comment{Only for $r=1$}
        \State $B_r = I_{d \times d}$ 
    \Else\Comment{$d_r < d$ for $r>1$}
        \State Find an orthonormal basis $B_r \in \R^{d \times d_r}$ for $\cA_{r-1}$
    \EndIf
    \State Find a G--optimal design $\pi_r$ for $B_r^T \cA_{r-1}$ \label{arm-wise}
    \State $T_r(i) \gets \ceil{m\pi_r(i)}$
    \State $T_r \gets \sum_{i \in \cA_{r-1}}T_r(i)$
    \State Choose arm $B_r^Ta(i)$ $T_r(i)$ times \label{playing}\Comment{Play}
    \State $V_r \gets B_r^T\left(\sum_{i \in \cA_{r-1}}T_r(i) a(i) a(i)^T\right) B_r$ 
    \State $\hat{\theta}_r \gets V_r^{-1} \sum_{t=t_r}^{t_r+T_r-1}B_r^T a(A_t)X_t$\label{estimation}
    \For {$i \in \cA_{r-1}$}
        \State $\hat{p_r}(i) \gets \hat{\theta}_r^T B_r^T a(i)$ \Comment{Estimate expected rewards}
    \EndFor 
    \State $\cA_r \gets \frac{d}{2^r} \text{ best arms in } \cA_{r-1}$ \Comment{Eliminate arms}
    \State $t_{r+1} \gets t_r + T_r$
\EndFor 
\end{algorithmic}
\end{algorithm}

Note that since the columns of $B_r$ span the subspace of $\cA_{r-1}$, each $a(i) \in \cA_{r-1}$ can be represented as a linear combination of these columns, hence $a(i) = B_r B_r^T a(i)$ and we have $a(i)^T \theta^* = (B_r B_r^T a(i))^T \theta^*$. Moreover, $a(i)^T \theta^* = (B_r^T a(i))^T(B_r^T\theta^*)$. Hence, one can estimate the reduced--dimension $B_r^T\theta^*$, using the reduced--dimension arms $B_r^T a(i)$. In practice, one keeps the arms as $B_r^T a(i)$ (for ease of estimation and computing the G-optimal design) but plays with full--dimension arms $B_r B_r^T a(i)$. The estimation process estimates the reduced--dimension vector. For example:
At line \ref{arm-wise} we assume operating on $\cA_{r-1}$ arm--wise. At line \ref{playing}, the arm in the set maintained is $B_r^Ta(i)$ but the player plays with the $d$--dimensional $B_rB_r^Ta(i)$. At line \ref{estimation} we actually estimate the reduced--dimension $\theta^*$, that is $B_r^T\theta^*$.
%%%%%
\subsection{Intuition on the Error Exponent and Gained Information}\label{app intuition}
One may wonder how is the division by $d$ in the exponent of the single round algorithm is evaded in the multi--round elimination process. Indeed, in a multi--round process, the best arm should not necessarily have the \emph{highest estimated reward}, but only has to survive the elimination process towards the next round. Hence, it does not have to win over $K$ arms in each round, but only not to loose to more than $\frac{d}{2^r}$ arms in the $r$--th round. Then, as the rounds evolve, the better arms are tested more times (compared to the first round), and their estimation improves. This is the essence of the exploration versus exploitation trade--off. In fact, \cite{yang2022minimax} uses $\log d$ rounds, with about $\frac{T}{\log d}$ arm pulls in each. This is the reason for the division in $\log d$ in the final exponent (the union over the rounds is negligible).\footnote{A back of the envelope calculation shows that towards, e.g., the second round, the probability of the best arm loosing to $d/2$ arms is roughly ${K \choose d/2} \left(e^{-\frac{T}{H_{2,\text{lin}}d}}\right)^{\frac{d}{2}} = \text{Poly}(K,d)e^{-\frac{T}{2H_{2,\text{lin}}}}$, which results in $\exp(\Omega(-T/H_{2,\text{lin}}))$, and not the single--round exponent which is divided by $d$. The same follows for the reminder of the rounds. However, the complete algorithm includes $\log d$ rounds, hence using $m \approx T/\log d$ pulls per round gives the final $\exp(\Omega(-\frac{T}{H_{2,\text{lin}}\log d}))$ exponent.}

%%%%%%%%%%%%%%%%%%%%%%%%%%%%%%%%%%%%%%%%%%
\end{document}